\let\newfloat\newfloat@ltx
\DeclareSIUnit\angstrom{\text {Å}}
\DeclareSIUnit\atom{atom}
\let\ps@titlepage\ps@plain
\DeclareSIUnit{\calorie}{\text{cal}}
\DeclareSIUnit{\kcal}{\kilo\calorie}
\DeclareSIUnit\angstrom{\text {Å}}
\newtheorem{theorem}{Theorem}%
\newtheorem{proposition}[theorem]{Proposition}%
\newtheorem{corollary}[theorem]{Corollary}
\renewcommand{\selectlanguage}[1]{}
\begin{document}

\title{Feynman-Kac-Flow: Inference Steering of Conditional Flow Matching to an Energy-Tilted Posterior}

\author{Konstantin Mark}
\author{Leonard Galustian}
\author{Maximilian P.-P. Kovar}
\author{Esther Heid}
\email{esther.heid@tuwien.ac.at}
\affiliation{Institute of Materials Chemistry, TU Wien, A-1060 Vienna, Austria}


\begin{abstract}
Conditional Flow Matching(CFM) represents a fast and high-quality approach to generative modelling, but in many applications it is of interest to steer the generated samples towards precise requirements. While steering approaches like gradient-based guidance, sequential Monte Carlo steering or Feynman-Kac steering are well established for diffusion models, they have not been extended to flow matching approaches yet. In this work, we formulate this requirement as tilting the output with an energy potential. We derive, for the first time, Feynman-Kac steering for CFM. We evaluate our approach on a set of synthetic tasks, including the generation of tilted distributions in a high-dimensional space, which is a particularly challenging case for steering approaches. We then demonstrate the impact of Feynman-Kac steered CFM on the previously unsolved challenge of generated transition states of chemical reactions with the correct chirality, where the reactants or products can have a different handedness, leading to geometric constraints of the viable reaction pathways connecting reactants and products. Code to reproduce this study is avaiable open-source at \href{https://github.com/heid-lab/fkflow}{https://github.com/heid-lab/fkflow}.
\end{abstract}

\maketitle

\section{Introduction}\label{sec1}
Since its introduction by Lipman et al. \cite{lipman_flow_2023}, Conditional Flow Matching (CFM) has seen several interesting applications,  ranging from image \cite{lipman_flow_2023}, audio \cite{guo2024voiceflow} and video \cite{davtyan2023efficient} generation to decision-making \cite{zheng2023guided}, time series modelling \cite{tamir2024conditional}, protein modelling \cite{stark2023harmonic,jing2024alphafold} or molecular structure design \cite{tian2024equiflow}, amongst others. 
CFM transforms samples from a source distribution (such as random noise) to samples following a given target distribution (such as images or molecular structures) by modelling probability paths via vector fields. It largely
 improves on diffusion-based methods both in quality and speed,
 establishing CFM as a popular generative method \cite{lipman_flow_2023}.

For many current applications, the ability to adjust the distributions produced by a CFM model to conform to certain properties has gained importance. This can range from improving image quality for image generation tasks to making chemical structure geometries more physically plausible. Such a task is generally formulated as tilting a model's output distribution with an energy potential- or reward-function.
There are several approaches to fine-tuning diffusion and flow matching models to produce samples related to some reward function. Primarily, diffusion allows direct back-propagation over the sampling path \cite{clark_directly_2024}. In a different direction, stochastic optimal control-based frameworks have been used to fine-tune a model to sample from a tilted distribution, both for diffusion \cite{zhang_path_2022} and flow matching \cite{domingo-enrich_adjoint_2025, tang_branched_2025}.
Inference-only steering methods for diffusion models generally fall either under gradient based guidance \cite{bansal_universal_2023} or sequential Monte Carlo(SMC) steering \cite{wu_practical_2023, li_derivative-free_2024}. Using gradients in the guidance generally leads to a faster convergence but only works for differentiable reward functions. The SMC methods have been unified and extended into one framework through Feynman-Kac steering (FK)\cite{singhal_general_2025}. FK in particular also extends Importance Sampling (IS) which can be used trivially on any distribution to generate tilted samples, although the latter often struggles with distributions in high-dimensional and multi-modal settings. Yet, FK steering has not been extended to CFM, and is currently only used to steer diffusion models.

A particularly promising but challenging application for steering methods is the generation of chemical structures, with a breadth of physics-inspired restrictions to tilt the target distribution to samples with low energies, bond length or angle constraints, or correct chirality, where many relevant molecular structures can occur left or right-handed causing a difference in biological activity. (Non-steered) generative models have shown large success in conformer generation \cite{xu_geodiff_2022, hassan_et-flow_2024, jing_torsional_2023}, the task of generating geometries of molecules that lie at local energy minima. This success has been extended to transition state (TS) prediction or refinement, first using diffusion\cite{kim_diffusion-based_2024} and then using flow matching \cite{duan_optimal_2025, galustian_goflow_2025, darouich_adaptive_2025} models, the latter offering most importantly much faster inference time.
The generation of transition state structures is a highly relevant task in chemical reaction modelling. TSs are first order saddle points that lie on a minimal energy reaction pathway on a potential energy surface. They are normally determined through quantum mechanical calculations that are expensive to compute. CFM is able to model the generation of geometries as a transformation of probability distributions. 
 
Generating geometries that conform to the right chirality has been a tricky task in conformer generation. Chiral molecules cannot be transformed into their mirror image by rotation and translation alone, which presents a difficulty for the rotation-equivariant architectures used for these generative tasks. Simple choices to handle the problem of chirality in conformer generation involve simply flipping a conformation if a geometry with the wrong chirality was predicted \cite{ganea_geomol_2021, hassan_et-flow_2024}, however this technique breaks down as soon as multiple chiral centres are involved and particularly for diastereomers.
Ref. \cite{hassan_et-flow_2024} tried implementing proper chirality handling in a CFM setting by implementing an SO(3)-equivariant architecture, but found that it was outperformed by simple flips. There are several methods specific to graph neural networks to deal with tetrahedral chirality by reducing the invariance of message passing aggregation to be aware of the order of neighbours \cite{pattanaik_message_2020, gainski_chienn_2023} but they have failed to gain traction for representations used in generation tasks. Ref. \cite{pattanaik_message_2020} mentions inadequacy of many of the datasets including QM9 \cite{ramakrishnan_quantum_2014} for judging performance of models on complex chiral tasks. Datasets like GEOM \cite{axelrod_geom_2022} based on QM9 that are extensively used for benchmarking conformer generation tasks could, by extension, be a reason for a lack of focus on proper chirality awareness in recent work on conformer generation \cite{wang_swallowing_2024, ganea_geomol_2021, hassan_et-flow_2024, hong_accelerating_2025}, although there are methods to accurately model chirality by working in non-euclidean spaces \cite{jing_torsional_2023}.
In the more complex task of biomolecular modelling there have been recent advances in chirality awareness. Ref. \cite{wohlwend_boltz-1_2025} uses Feynman-Kac steering to reduce hallucinations including incorrect stereochemistry from a diffusion generative model. Ref. \cite{corley_accelerating_2025} improve accuracy in chiral centres by adding chirality based angle gradients as features.

The difficulty of generating chirality-aware geometries is exacerbated in the setting of TS generation. Primarily, chirality is not well-defined for TS, but rather comes together as a superposition of the chiralities of reactant and product that subtly depends on the mechanisms behind a given reaction. As such the technique of flipping into position is out of question but for simple cases where chiral centres are not actively involved in the reaction. 
Notably Ref. \cite{duan_optimal_2025} relies on incorporating reactant and product conformers into the prior distributions, thereby simultaneously alleviating many of the stereochemical difficulties. Chirality, however, will remain problematic when scaling existing TS methods up in application scope and molecule size. We therefore choose to solve the problem of generating correct isostereomers in isolation.

In this work, we introduce for the first time a framework for using Feynman-Kac steering (FK)\cite{singhal_general_2025} on CFM models to steer predictions to a tilted distribution. 
Concurrently to our research, Ref. \cite{feng_guidance_2025} recently developed guidance of CFM models towards a tilted distribution by Monte Carlo-sampling a guidance vector field. Our approach falls outside of the framework defined in their work, and we include an experimental comparison and highlight the difference in methodology. 
We then use our new FK-CFM framework to provide the first generative model that can produce stereochemically correct TSs from 2D graph information alone. This is achieved by steering GoFlow \cite{galustian_goflow_2025} with a potential based on tetrahedral volumes around tetrahedral chiral centres. We also provide an experimental analysis of FK's ability to isolate modes in a high-dimensional setting, a difficulty that was explicitly mentioned in \cite{feng_guidance_2025}.

\section{Methods}

We will first introduce general CFM notions in Section \ref{sec:cfm}. Section \ref{sec:sde} discusses the equivalence of flow models with a type of Itô stochastic differential equation (SDE). In Section \ref{sec:fks} we present Feynman-Kac steering as it applies to this setting.

\begin{figure*}[t]
    \centering
    \makebox[\linewidth]{\includegraphics[width=0.9\linewidth]{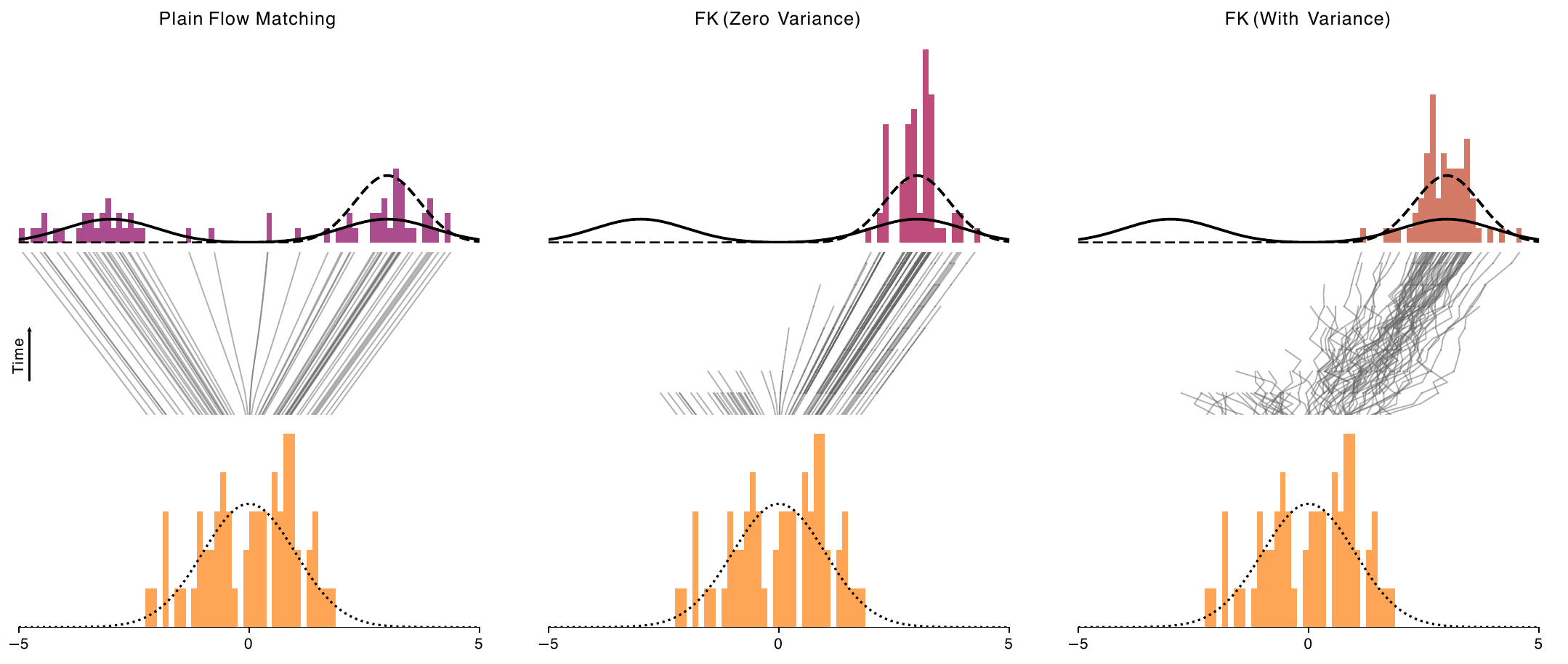}}
    \caption{Illustration of FK on a toy flow matching model from a Gaussian distribution (bottom) towards a mixture of two Gaussians (top). Tilting with a energy potential function isolates one of the modes (dashed line). FK on a deterministic Flow model converges to the correct tilted distribution but leads to a collapse in the number of active paths and a lack of exploration (middle). This is alleviated by injecting a diffusion process that maintains the marginals by correcting the flow velocities (right).}
    \label{fig:twopeaks}
\end{figure*}
\subsection{Conditional Flow Matching} \label{sec:cfm}

\textit{Conditional Flow Matching} \cite{lipman_flow_2023} trains a model to transform a known source distribution $p_0$ over $\mathbb R^d$ (often chosen to be $\mathcal N(0, I)$) to a target distribution $p_1$. This is done by seeing the transition from $p_0$ to $p_1$ as a a time-continuous transformation defining a probability path $p_t, t\in [0,1]$ and modelling this transformation with a time-dependent diffeomorphism $\psi_t(x)\in C^\infty([0,1]\times \mathbb R^d, \mathbb R^d$ on the associated random variables:
\begin{equation*}
    X_t = \psi_t(X_0) \sim p_t,\qquad X_0\sim p_0.
\end{equation*}
Such a function $\psi_t(x)$ is called \textit{flow}. A flow can be uniquely determined by a velocity field via the flow ODE
\begin{eqnarray*}
    \frac{\partial \psi_t(x)}{\partial t} &=& \mathbf v_t(\psi_t(x)),\\
    \psi_0(x) &=& x.
\end{eqnarray*}
We say that $\mathbf v_t(x)$ \textit{generates} the probability path $p_t$.\\
The \textit{Mass Conservation Formula}\cite{villani_optimal_nodate} states that $\mathbf v_t$ generates $p_t$ if and only if it satisfies the Continuity Equation \begin{equation}
    \frac{\partial p_t}{\partial t}(x) = - \nabla\cdot (\mathbf v_t(x)p_t(x))
\end{equation}
for $t\in [0,1)$.\\
How a probability path and thus a flow is chosen for a pair of distributions $(p_0, p_1)$ is often a modelling choice, but commonly an interpolation of the corresponding random variables is used:
\begin{equation}
    X_t = \alpha_t X_1 + \beta_t X_0,
\end{equation}
for $X_0\sim p_0, X_1\sim p_1$ and \textit{schedules} $\alpha_t, \beta_t$. To ensure consistency with the goal of transforming $p_0$ to $p_1$, these schedules must be chosen such that $\alpha_1 = \beta_0 = 1$ and $\alpha_0 = \beta_1 = 0$. For example, the \textit{optimal transport} schedule $X_t = (1-t)X_0 + tX_1$ satisfies these conditions. In this setting we say that a flow $\psi_t(x)$ or the \textit{velocity field} $\mathbf v_t(x)$ \textit{generates} the path between $p_0$ and $p_1$ with schedules $\alpha_t, \beta_t$. \\
To determine specific velocity fields in practice, a neural network $\mathbf v^\theta_t(x): [0,1]\times \mathbb R^d\to \mathbb R^d$ is trained to approximate the velocity field $\mathbf  v_t(x)$ from this flow ODE in an $L^2$-sense:
\begin{equation}\label{eqn:fm_loss}
\begin{aligned}
        \mathcal L_{\mathrm{FM}}(\theta):= \mathbb E_{t, X_t} \left\| \mathbf v_t^\theta(X_t) - \mathbf v_t(X_t)\right\|^2, \\ t\sim \mathcal U[0,1], X_t\sim p_t.
\end{aligned}
\end{equation}
This is called the \textit{Flow Matching loss}. Due to $p_1$ and therefore also $\mathbf v_t$ being generally unknown this objective is generally intractable. Remarkably however, one can train on a tractable objective that differs from the Flow Matching loss only by a constant by defining paths conditional on a target distribution sample $x_1$. The conditional probability path is the distribution of the random variable \begin{equation}
    X_{t|1} = \alpha_t x_1 +  \beta_t X_0 \sim p_{t|1}(\cdot|X_1 = x_1).
\end{equation}
It allows defining a tractable conditional velocity field $\mathbf v_t(X_{t|1}|x_1) = \frac{\mathrm d}{\mathrm dt}X_{t|1}$ and a tractable c\textit{onditional Flow Matching loss} 
\begin{equation} \begin{aligned}
        \mathcal L_{\mathrm{FM}}(\theta):= \mathbb E_{t, X_{t|1}, X_1}\left\| \mathbf v_t^\theta(X_t) - \mathbf v_t(X_{t|1}|X_1)\right\|^2, 
        \\ t\sim \mathcal U[0,1], X_1\sim p_1, X_{t|1} \sim p_{t|1}(\cdot|X_1).
\end{aligned}
\end{equation}

\subsection{Stochastic Differential Equations}\label{sec:sde}
The Feynman-Kac steering framework as introduced in \cite{singhal_general_2025} was formulated only for diffusion models. Technically, the similarity in situation does allow direct application of the steering to the CFM, but the deterministic nature of the flow would induce a reduction in available particles at every resampling step. To combat this, we aim to inject stochasticity into the flow model. Fig.~\ref{fig:twopeaks} depicts a non-steered CFM model versus FK steering with and without stochasticity, to illustrate the difference.\\
We will express flow and stochastic flow models as Itô processes defined by the stochastic differential equation (SDE)
\begin{equation}\label{eq:sde_standard}
    \mathrm d X_t = \mathbf v_t(X_t)\mathrm dt + \mathbf U_t(X_t)\mathrm \cdot \mathrm dB_t.
\end{equation}
Here $\mathbf v_t(x)\in \mathbb R^n$ is a drift vector, $\mathbf U_t(x) \in R^{n\times m}$ is a diffusion matrix and $B_t\in \mathbb R^m$ is a Brownian motion. Most importantly, $\mathbf U(X_t, t)$ and thus $X_t$ must be adapted to the filtration generated by $B_t$. Equation \eqref{eq:sde_standard} is understood in the sense that the process $X_t: [0,1]\times\mathbb R^n \to \mathbb R^n$ fulfils an integral equation involving an Itô integral with respect to $B_t$ given by
\begin{equation}\label{eq:sde_integral}
    X_t = \int_0^t\mathbf v_s(X_s)\mathrm ds + \int_0^t\mathbf U_s(X_s)\mathrm \cdot \mathrm dB_s.
\end{equation}
In the particular setting of a CFM, the corresponding SDE is a purely deterministic ODE i.e. $ \mathrm d X_t = \mathbf v_t(X_t)\mathrm dt$. Under the assumption of a Gaussian prior, we can inject some diffusion into the process and still maintain the same marginals through a correction term in the velocity vectors. For simplicity, we will only treat isotropic diffusion matrices of the form $\sigma(t)I$ with $\sigma(t)$ a scalar and $I\in \mathbb R^d$ the identity matrix.

\begin{theorem}[{\cite{maoutsa_interacting_2020}}]\label{thm:score_injection}
Assume $\mathbf v_t(x)$ locally Lipschitz generates the path $p_t$ between $p_0$ and $p_1$ with schedules $\alpha_t, \beta_t$. Then for any noise schedule $\sigma: [0,1]\to \mathbb R_{\geq 0}$, the stochastic process $Y_t$ defined by the SDE \begin{align*}
\mathrm dY_t &= \bigg( \mathbf v_t(Y_t) + \frac{\sigma(t)^2}{2} \nabla \log p_t(Y_t) \bigg) \mathrm dt \\
&\quad + (\sigma(t)I) \mathrm dB_t, \quad Y_0 \sim p_0
\end{align*}
is distributed according to the probability path $p_t$, i.e. $Y_t \sim p_t$.
\end{theorem}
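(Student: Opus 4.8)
The plan is to reduce the statement to a comparison of evolution equations for the time marginals. For a generic Itô SDE $\mathrm dY_t = \mathbf b_t(Y_t)\,\mathrm dt + \sigma(t)\,\mathrm dB_t$, the marginal law $q_t$ of $Y_t$ evolves according to the associated Fokker--Planck (forward Kolmogorov) equation
\begin{equation*}
\frac{\partial q_t}{\partial t} = -\nabla\cdot(\mathbf b_t q_t) + \frac{\sigma(t)^2}{2}\,\Delta q_t,
\end{equation*}
which I would derive in the standard way: apply Itô's formula to a test function $\varphi\in C_c^\infty(\mathbb R^d)$ evaluated along $Y_t$, take expectations so that the Itô (martingale) term drops out, and integrate by parts to shift the derivatives off $\varphi$ and onto $q_t$. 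With the injected drift $\mathbf b_t = \mathbf v_t + \tfrac{\sigma(t)^2}{2}\nabla\log p_t$, it then suffices to verify that the candidate $q_t = p_t$ solves this PDE.

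The verification is a short cancellation. Substituting $q_t = p_t$ and $\mathbf b_t$ into the right-hand side and expanding the divergence yields
\begin{equation*}
-\nabla\cdot(\mathbf v_t p_t) - \frac{\sigma(t)^2}{2}\,\nabla\cdot\big(p_t\,\nabla\log p_t\big) + \frac{\sigma(t)^2}{2}\,\Delta p_t.
\end{equation*}
The key identity is $p_t\,\nabla\log p_t = \nabla p_t$, so that $\nabla\cdot(p_t\,\nabla\log p_t) = \Delta p_t$ and the two $\sigma(t)^2$ terms cancel exactly. What remains is $-\nabla\cdot(\mathbf v_t p_t)$, which equals $\partial_t p_t$ by the Mass Conservation Formula (the Continuity Equation), since $\mathbf v_t$ is assumed to generate $p_t$. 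Hence $p_t$ satisfies the Fokker--Planck equation for the injected SDE, and because $Y_0\sim p_0$ the two solutions share the same initial datum.

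To pass from ``$p_t$ and $q_t$ solve the same linear PDE with the same initial condition'' to the desired conclusion $Y_t\sim p_t$, I would invoke a uniqueness theorem for the Fokker--Planck equation. This is where I expect the real work to sit: the cancellation above is purely formal, but the local Lipschitz hypothesis on $\mathbf v_t$ alone only guarantees a unique strong solution of the SDE up to a possible explosion time, and one must additionally control the score $\nabla\log p_t$ to ensure $q_t$ is genuinely a (weak) solution of the stated equation and that uniqueness applies. Under the Gaussian-prior assumption made in the text, $p_t$ is a smooth, strictly positive Gaussian mollification, so $\nabla\log p_t$ inherits the regularity and integrability needed for a standard well-posedness result to apply. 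The cleanest route is therefore to quote such a theorem---precisely the content of the cited result of Maoutsa et al.---rather than to reprove existence, non-explosion, and uniqueness from first principles.
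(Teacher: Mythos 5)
Your proposal is correct and takes essentially the same route as the paper: both rest on the single identity $p_t\,\nabla\log p_t = \nabla p_t$, used to show that the continuity equation for $\mathbf v_t$ is equivalent to the Fokker--Planck equation of the SDE with drift $\mathbf v_t + \tfrac{\sigma(t)^2}{2}\nabla\log p_t$ and diffusion $\sigma(t)I$ (the paper rewrites the continuity equation into Fokker--Planck form, you substitute $p_t$ into the Fokker--Planck equation and cancel --- the same computation run in opposite directions). Your closing paragraph on well-posedness and uniqueness of the Fokker--Planck equation makes explicit a step the paper leaves implicit in its final ``implying that $Y_t\sim p_t$'', which is added care rather than a different argument.
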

\begin{proof}
By the Mass Conservation Theorem \cite{villani_optimal_nodate}, $\mathbf v_t(x)$ generates the path $p_t$ if and only if the continuity equation \begin{equation}\label{eq:continuity}
    \frac{\partial p_t}{\partial t}(x) = - \nabla\cdot (\mathbf v_t(x)p_t(x))
\end{equation}
holds. Using the identity $\nabla \log p_t(x) = \frac1{p_t(x)}\nabla p_t$ obtained via the chain rule, we see that \begin{equation}
\begin{aligned}
        \mathbf v_t(x)p_t(x) = \bigg(\mathbf v_t(x) &+ \frac{\sigma(t)^2}2\nabla\log p_t(x)\bigg)p_t(x) 
        \\ & - \frac{\sigma^2(t)}2 \nabla p_t(x).
\end{aligned}
\end{equation}
Writing $\mathbf w_t(x) := \left(\mathbf v_t(x) + \frac{\sigma(t)^2}2\nabla\log p_t(x)\right)$ and inserting this into the continuity equation \eqref{eq:continuity}, we obtain
\begin{equation}
    \frac{\partial p_t(x)}{\partial t} = -\nabla \cdot\left(\mathbf w_t(x)p_t(x) - \frac{\sigma^2(t)}2 \nabla p_t(x)\right).
\end{equation}
This is the Fokker-Planck equation for a stochastic process $Y_t$ with drift $\mathbf w_t(x)$ and diffusion $\sigma(t)I$, implying that $Y_t\sim p_t$.
\end{proof}
In the case of a Gaussian prior, the score $\nabla\log p_t(x)$ can be expressed directly in dependence of the velocity field:
\begin{proposition}[{\cite[B.4]{domingo-enrich_adjoint_2025}, \cite[Proposition 1]{holderrieth_introduction_2025}}]\label{prop:gaussian_score}
    Assume $\mathbf v_t(x)$ locally Lipschitz generates the path $p_t$ between $p_0\sim \mathcal N(0,I)$ and $p_1$ with schedules $\alpha_t, \beta_t$. 
    Then\begin{equation}
         \nabla \log p_t(x) = \frac1{\beta_t\left(\dfrac{\dot{\alpha}_t}{\alpha_t}\beta_t - \dot{\beta}_t\right)}\left(\mathbf v_t(x)- \frac{\dot\alpha_t}{\alpha_t} x\right)
    \end{equation}
\end{proposition}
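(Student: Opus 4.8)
The plan is to exploit the Gaussian prior to make every \emph{conditional} object explicit, and then eliminate the single common quantity shared by the score and the velocity field, namely the posterior mean $m(x) := \mathbb E[X_1\mid X_t = x]$.

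First I would use that, since $p_0 = \mathcal N(0,I)$ and the conditional path is the law of $X_{t\mid1} = \alpha_t x_1 + \beta_t X_0$, the conditional density is exactly Gaussian, $p_{t\mid1}(x\mid x_1) = \mathcal N(x;\,\alpha_t x_1,\,\beta_t^2 I)$. From this I read off two closed forms: the conditional score
\[
\nabla_x \log p_{t\mid1}(x\mid x_1) = -\frac{x - \alpha_t x_1}{\beta_t^2},
\]
and, writing the latent as $x_0 = (x-\alpha_t x_1)/\beta_t$, the conditional velocity
\[
\mathbf v_t(x\mid x_1) = \dot\alpha_t x_1 + \dot\beta_t x_0 = \dot\alpha_t x_1 + \frac{\dot\beta_t}{\beta_t}\,(x - \alpha_t x_1).
\]

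Second, I would pass from conditional to marginal through the posterior $p_{1\mid t}(x_1\mid x) = p_{t\mid1}(x\mid x_1)\,p_1(x_1)/p_t(x)$. The standard CFM marginalization identity gives the generating field as $\mathbf v_t(x) = \mathbb E[\mathbf v_t(x\mid X_1)\mid X_t = x]$, while differentiating $p_t(x) = \int p_{t\mid1}(x\mid x_1)\,p_1(x_1)\,\mathrm dx_1$ under the integral and using $\nabla_x p_{t\mid1} = p_{t\mid1}\,\nabla_x \log p_{t\mid1}$ yields the matching identity $\nabla\log p_t(x) = \mathbb E[\nabla\log p_{t\mid1}(x\mid X_1)\mid X_t = x]$. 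The crucial observation is that both marginals are averages against the \emph{same} posterior weights, so both are affine in $m(x)$. Taking expectations of the two conditional formulas gives
\[
\nabla\log p_t(x) = \frac{\alpha_t m(x) - x}{\beta_t^2}, \qquad \mathbf v_t(x) = \Big(\dot\alpha_t - \frac{\dot\beta_t\alpha_t}{\beta_t}\Big)m(x) + \frac{\dot\beta_t}{\beta_t}\,x.
\]
I would then solve the first relation for $m(x)$ (this is Tweedie's formula), substitute into the second, and simplify: the coefficient of $\nabla\log p_t(x)$ collapses to $\beta_t\big(\tfrac{\dot\alpha_t}{\alpha_t}\beta_t - \dot\beta_t\big)$ and the residual $x$-terms combine to $\tfrac{\dot\alpha_t}{\alpha_t}x$, so that rearranging produces exactly the claimed identity.

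The only genuine obstacle is the averaging step: establishing that the marginal score equals the posterior average of the conditional score, and likewise that the \emph{generating} velocity coincides with the posterior average of the conditional velocity. The score part requires differentiation under the integral sign and a strictly positive denominator $p_t(x) > 0$, both guaranteed by Gaussian smoothing with $\beta_t > 0$; the velocity part is the standard CFM construction combined with local-Lipschitz uniqueness of the field generating $p_t$, which is exactly the regularity hypothesis in the statement. Once these two averaging identities are secured, what remains is the elimination of $m(x)$ and routine algebra with the schedules $\alpha_t,\beta_t$.
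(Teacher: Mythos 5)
Your derivation is correct, and the algebra checks out: with $p_{t\mid1}(x\mid x_1)=\mathcal N(x;\alpha_t x_1,\beta_t^2 I)$ one gets $\nabla\log p_t(x)=(\alpha_t m(x)-x)/\beta_t^2$ and $\mathbf v_t(x)=\bigl(\dot\alpha_t-\tfrac{\dot\beta_t\alpha_t}{\beta_t}\bigr)m(x)+\tfrac{\dot\beta_t}{\beta_t}x$, and eliminating $m(x)$ gives $\mathbf v_t(x)-\tfrac{\dot\alpha_t}{\alpha_t}x=\beta_t\bigl(\tfrac{\dot\alpha_t}{\alpha_t}\beta_t-\dot\beta_t\bigr)\nabla\log p_t(x)$, exactly the claim. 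Note that the paper itself gives no proof of this proposition --- it is imported by citation --- and the argument in the cited sources is essentially yours: conditional Gaussian score and velocity, posterior averaging (Tweedie), elimination of the posterior mean. One point of wording deserves correction: you invoke ``local-Lipschitz uniqueness of the field generating $p_t$,'' but Lipschitz regularity yields uniqueness of flow trajectories for a \emph{given} field, not uniqueness of the field for a given path --- adding any $\mathbf w_t$ with $\nabla\cdot(\mathbf w_t p_t)=0$ produces another generating field, for which the identity would fail. What actually pins $\mathbf v_t$ down is the paper's convention that ``generates the path with schedules $\alpha_t,\beta_t$'' refers to the marginal CFM field $\mathbf v_t(x)=\mathbb E[\mathbf v_t(x\mid X_1)\mid X_t=x]$ arising from the interpolation construction, which is precisely the field your averaging step uses, so in context this is a misattributed justification rather than a gap. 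You should also state explicitly that the identity holds for $t\in(0,1)$, where $\alpha_t,\beta_t>0$ keep the denominators (and the Gaussian smoothing guaranteeing $p_t>0$ and differentiation under the integral) nondegenerate; at the endpoints $\alpha_0=\beta_1=0$ the expression degenerates.
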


The combination of Theorem \ref{thm:score_injection} and Proposition \ref{prop:gaussian_score} lets us express the CFM ODE as an SDE with any choice of diffusion schedule.
\begin{corollary}
Assume $\mathbf v_t(x)$ locally Lipschitz generates the path $p_t$ between $p_0\sim \mathcal N(0,I)$ and $p_1$ with schedules $\alpha_t, \beta_t$. Then for any $\sigma: [0,1]\to \mathbb R_{\geq 0}$, the stochastic process $Y_t$ defined by the SDE 
\begin{equation}
\begin{aligned}
\mathrm dY_t
&= \Bigg(
    \mathbf v_t(Y_t)
    + \frac{\sigma(t)^2}{
        2\beta_t\left(\dfrac{\dot{\alpha}_t}{\alpha_t}\beta_t - \dot{\beta}_t\right)
      }
      \left(
        \mathbf v_t(Y_t) - \frac{\dot{\alpha}_t}{\alpha_t} Y_t
      \right)
  \Bigg)\mathrm dt  \\
&\quad + (\sigma(t) I)\,\mathrm dB_t,
\qquad Y_0 \sim \mathcal N(0,I)
\end{aligned}
\end{equation}
is distributed according to the probability path $p_t$, i.e. $Y_t \sim p_t$.
\end{corollary}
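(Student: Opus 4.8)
The plan is to obtain this statement as an immediate consequence of the two preceding results via direct substitution, so the proof is essentially bookkeeping rather than new analysis. First I would check that the hypotheses of Theorem~\ref{thm:score_injection} and Proposition~\ref{prop:gaussian_score} hold simultaneously. The corollary assumes precisely that $\mathbf v_t(x)$ is locally Lipschitz and generates the path $p_t$ between $p_0\sim\mathcal N(0,I)$ and $p_1$ with schedules $\alpha_t,\beta_t$; these are exactly the combined premises of the two earlier results, so both may be invoked without further verification.

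Second, I would apply Theorem~\ref{thm:score_injection} with the given (arbitrary) noise schedule $\sigma:[0,1]\to\mathbb R_{\geq 0}$ to conclude that the process solving
\begin{equation*}
\mathrm dY_t = \Bigl(\mathbf v_t(Y_t) + \tfrac{\sigma(t)^2}{2}\,\nabla\log p_t(Y_t)\Bigr)\mathrm dt + (\sigma(t)I)\,\mathrm dB_t,\qquad Y_0\sim p_0,
\end{equation*}
has the property $Y_t\sim p_t$. Third, since the Gaussian-prior hypothesis is in force, I would substitute the closed-form score from Proposition~\ref{prop:gaussian_score}, namely $\nabla\log p_t(x)=\bigl[\beta_t(\tfrac{\dot\alpha_t}{\alpha_t}\beta_t-\dot\beta_t)\bigr]^{-1}\bigl(\mathbf v_t(x)-\tfrac{\dot\alpha_t}{\alpha_t}x\bigr)$, into this drift. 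Folding the factor $\tfrac{\sigma(t)^2}{2}$ into the denominator then reproduces verbatim the drift appearing in the corollary, while the diffusion term $\sigma(t)I$ is carried through unchanged. Since the distributional conclusion $Y_t\sim p_t$ is preserved under this algebraic rewriting of the drift (we have only re-expressed the same vector field), the claim follows.

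I do not expect any genuine obstacle here: the result is a corollary in the strict sense, combining a distributional statement (Theorem~\ref{thm:score_injection}) with an identity (Proposition~\ref{prop:gaussian_score}). The only point requiring a moment of care is the elementary simplification that merges the $\tfrac{\sigma(t)^2}{2}$ prefactor with the score's denominator $\beta_t(\tfrac{\dot\alpha_t}{\alpha_t}\beta_t-\dot\beta_t)$ to match the stated coefficient; one should also implicitly note that this denominator is nonzero for $t\in[0,1)$ under the schedule conditions $\alpha_1=\beta_0=1$, $\alpha_0=\beta_1=0$, so the substituted drift is well defined wherever the underlying score is.
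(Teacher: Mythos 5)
Your proposal is correct and matches the paper's own (implicit) argument exactly: the paper derives the corollary by substituting the closed-form score of Proposition~2 into the SDE of Theorem~1, which is precisely your plan of invoking both results under the shared hypotheses and performing the algebraic merge of the $\tfrac{\sigma(t)^2}{2}$ prefactor with the score's denominator. Your additional remark that the denominator $\beta_t\bigl(\tfrac{\dot\alpha_t}{\alpha_t}\beta_t-\dot\beta_t\bigr)$ must be nonzero on $[0,1)$ is a point of care the paper leaves implicit, but it does not change the route.
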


In the case of flow matching with non-Gaussian priors, the score needs to be obtained in a non-analytical form. This can be done for any $x_t$ by integrating the continuity equation backwards along $[0,t]$. Since this would incur heavy computational cost at each Euler-Maruyama step, one can instead integrate a batch of paths from the trained CFM model and train a separate score model on the scores obtained. Methods such as ([SF]²M) \cite{tong_simulation-free_2024} provide a way to train a flow and score model concurrently and without simulation through a stochastic regression objective.

\subsection{Feynman-Kac steering }\label{sec:fks}
Feynman-Kac steering (FK)\cite{singhal_general_2025} is a framework for steering diffusion models to a tilted distribution. By defining a reward function $U:\mathbb R^d\to \mathbb R$ and a steering temperature $\lambda\in \mathbb R$, a
diffusion model generating samples from a distribution $p_1(x)$ is tilted to produce samples from the unnormalised distribution $p_1(x)\cdot\exp(-\lambda U(x))$ during inference, with the goal of producing samples with low values of $U(x)$. This is done by concurrently simulating a set of $k$ particles and resampling them based on intermediate potentials $G_i$. While this framework has originally been formulated for diffusion models, we will describe how it can be adapted to CFM inference.

At each step $i$, $k$ particles $x_i^{[k]}$ are proposed through the model based on the previous paths $(x_0^{[k]}, \dots, x_{i-1}^{[k]})$, either through an ODE integration step of the flow model or through an Euler-Maruyama step on the SDE defined in Theorem \ref{thm:score_injection}. Based on the path of a particle $j$, potentials $G_i\left(x_0^j, x_1^j, \dots, x_i^j\right)$ are chosen. The particles $x_i^{[k]}$ are then resampled based on these potentials. These basic steps are illustrated in Figure \ref{fig:main_illustration}.

\begin{figure}[t]
    \centering
    \includegraphics[width=0.9\linewidth]{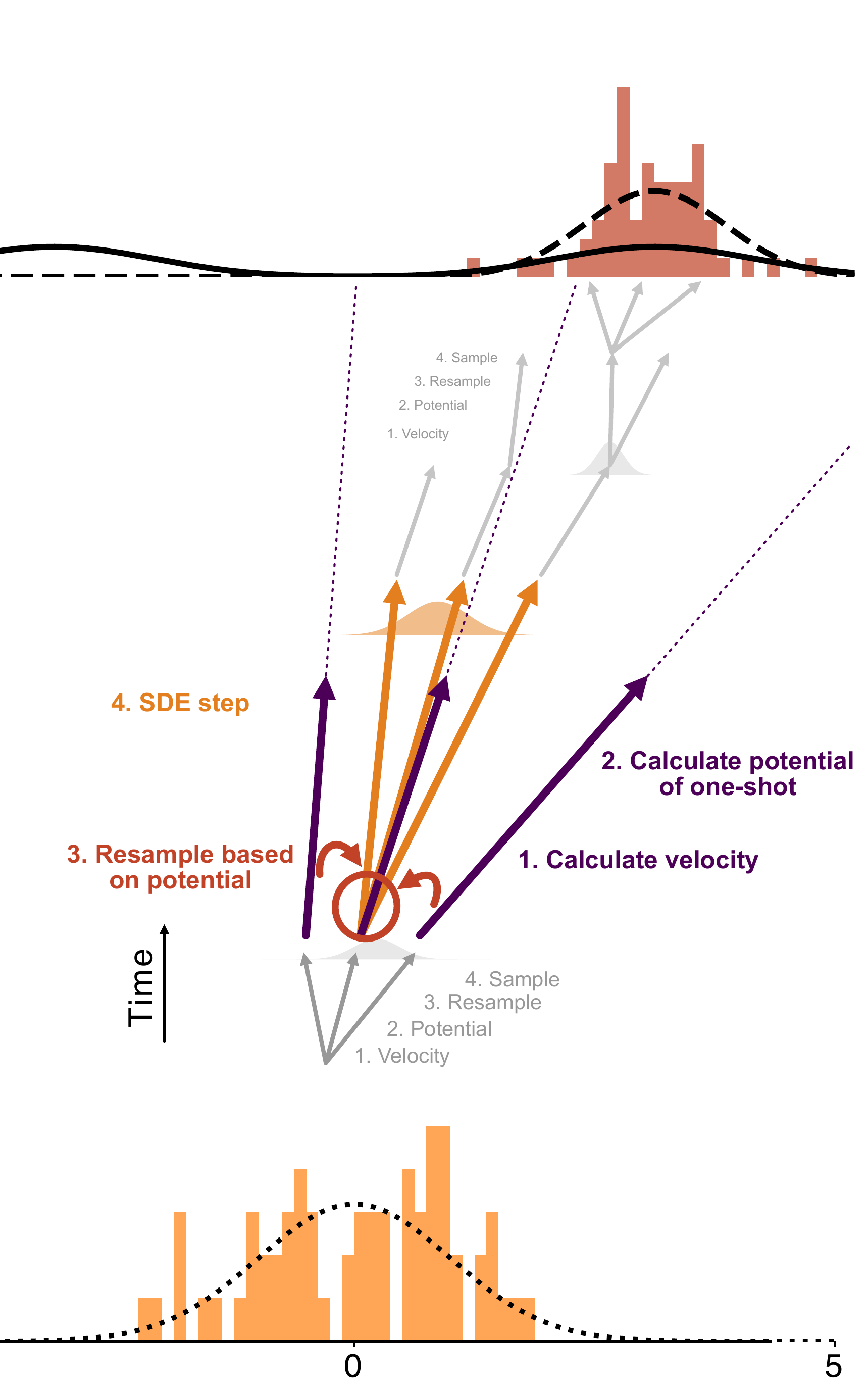}
    \caption{Illustration of the main steps in an iteration of Feynman-Kac flow matching inference. Based on the velocity vectors from a flow matching model (1.) an energy potential is evaluated at the one-shot approximation for $t=1$ (2.). The particles are then resampled (3.) before the SDE integration step (4.) which reuses the previously calculated velocities.}
    \label{fig:main_illustration}
\end{figure}

Particle systems for which along a full path $(x_0^j, x_1^j, \dots, x_n^j)$ the potentials fulfil \begin{equation}
    \prod_{i=0}^nG_i\left(x_0^j, x_1^j, \dots, x_i^j\right) = \exp\left(-\lambda U\left(x_i^j\right)\right),
\end{equation}
converge to produce samples from the tilted unnormalised distribution $p_1(x)\cdot\exp(-\lambda U(x))$ as $k\to \infty$. The specific speed at which this convergence occurs does however depend on the potentials themselves. \cite{singhal_general_2025} define first an intermediate reward function $r(x_i^j, t)$ and then define the potentials through one of three schedules:
\begin{itemize}
    \item \textbf{Difference}: $$G_i\left(x_0^j, x_1^j, \dots, x_i^j\right) = \exp(-\lambda (r(x_i^j, t_i)-r(x_{i-1}^j, t_{i-1})))$$ and $G_n = 1$.
    \item \textbf{Max}:  $$G_i\left(x_0^j, x_1^j, \dots, x_i^j\right) = \exp(-\lambda \max_{\ell=0}^i r(x_\ell^j, t_\ell))$$ and $G_n = \exp(-\lambda U(x))\cdot(\prod_{\ell=0}^nG_\ell)^{-1}$.
    \item \textbf{Sum}: $$G_i\left(x_0^j, x_1^j, \dots, x_i^j\right) = \exp(-\lambda \sum_{\ell=0}^i r(x_\ell^j, t_\ell))$$ and $G_n = \exp(-\lambda U(x))\cdot(\prod_{\ell=0}^nG_\ell)^{-1}$.
\end{itemize}
It is noted that good intermediate rewards at $x_i^j$ at step $i$ should aim to encompass the distribution of the reward for the samples obtained through $x_i^j$ as closely as possible. We will propose several flow-matching specific methods for intermediate rewards. We want to further stress that the schedules named above are generally designed to favour high-reward (i.e. low potential) regions for sampling instead of aiming to fully reproduce the distribution over the full domain. This aligns with most applications, where one specifically wants to steer towards those regions. If one wants to more accurately reflect the full tilted distribution, these schedules should be adapted to comply with this task. Particularly one should aim to avoid eliminating low-reward areas early on, e.g. by adding some tempering or annealing into the schedule.\\
The simplest way to define intermediate rewards is by evaluating the final reward function at the current point i.e. $r(x_i^j, t) = U(x_i^j)$. This is cheap since no new position need be sampled, but has only limited relevance in estimating the reward for where paths from this position will end up. This is for example how Ref.~\cite{wohlwend_boltz-1_2025} implements chirality steering.

\begin{algorithm}[t]
\caption{FK with a difference schedule}
\label{algo:FK}
\begin{algorithmic}[1]
\Require{prior \(p_0\)}
\Require{flow model $v_\theta(x,t)$}
\Require{score model $s_\eta(x,t)$}
\Require{energy potential $U(x)$}
\Require{temperature $\lambda$}
\Require{steps $K$}
\Require{diffusion variance $\sigma(t)$ }
\Require{particle size $S$}
\State{Sample $x^s \sim p_0$ for $s=1,\dots,S$}
\State{$\mathrm dt \gets 1/(K-1)$, $t \gets 0$.}
\State{compute $v^s \gets v_\theta(x^s, t)$ for all $s$.}
\State{$U_{\text{prev}}^s = 0$}
\For{$k = 0,\dots,K-2$}
\State{\textbf{Propagate} (via ODE or SDE integration):}
  \State{$\quad w^s \gets v^s + \tfrac{\sigma(t)^2}{2}\, s_\eta(x^s,t)$}
  \State{$\quad x^s \gets x^s + \mathrm dt\, w^s + \sqrt{\mathrm dt}\, \sigma(t)\, \xi^s$,
         $\xi^s \sim \mathcal{N}(0,I)$.}
\State{\textbf{Resample:}}
\State{$\quad y^s \gets x^s + (1 - t-\mathrm dt)\, v^s$}
\State{$\quad$Compute weights $G^s\gets\exp(-\lambda\cdot(U(y^s) - U_{\text{prev}}^s))$}
\State{$\quad U_{\text{prev}}^s\gets U(y^s) $}
\State{$\quad$Resample indices $i_s \sim \text{Multinomial}(\{G^s\}_{s=1}^S)$}
\State{$\quad x^s \gets x^{i_s}$}
  \State{$t \gets t + \mathrm dt$.}
\EndFor
\State{\Return $\{x^s\}_{s=1}^S$}.
\Ensure{$\{x^s_1\}_{s=1}^S$ approximately from $p_1(x)\exp(-\lambda U(x))$}
\end{algorithmic}

\end{algorithm}

CFM instead allows evaluating the reward at the Euler one-shot evaluation of the flow model starting from position $x_i^j$. That is, using $\mathbf v_{t_i}^\theta(x_i^j)$, we can calculate $y_i^j = x_i^j + (1-t_i)\mathbf v_{t_i}^\theta(x_i^j)$ as an approximation to the final value of the flow model continuing on from $x_i^j$. We then define $r(x_i^j, t) = U(y_i^j)$. Note that even though $\mathbf v_{t_i}^\theta(x_i^j)$ is used in this formula, its use does not cause any additional computational cost, since it needs to be calculated for the next ODE/SDE step, see Algorithm~1. Therefore, depending on the straightness of the paths generated by the flow matching model, this method can lead to a cheap but effective estimation of the final reward that is obtainable starting at $x_i^j$. In practice and in particular with the sum potential schedule, we often ended up multiplying this reward by an additional time dependent factor to increase steering at later inference steps when this approximation is more accurate and decrease it at early inference steps. For example, when in total $L$ resampling steps are performed, at resampling step $\ell$ we use the intermediate reward $r(x_i^j, t) = \frac1{(L+1-\ell)H_L} U(y_i^j)$ with the harmonic sum $H_L = \sum_{\ell = 1}^n\frac1{L+1-\ell}$. We call this the \textit{harmonic sum schedule} and used it throughout the experiments listed below.\\
Higher order estimations can also be used as long as no diffusion occurs between the velocity vectors used for this approximation. For example, additionally using $\mathbf v_{t_{i-1}}^\theta(x_{i-1}^j)$, one can get a second order approximation $$\tilde y_i^j = x_i^j + (1-t_i)\mathbf v_{t_i}^\theta(x_i^j) + \frac{(1 - t_i)^2}{2(t_i - t_{i-1})}(\mathbf v_{t_i}^\theta(x_i^j) - \mathbf v_{t_{i-1}}^\theta(x_{i-1}^j))$$ as long as step $i-1$ is an ODE integration step and not an SDE step.\\
For more accuracy in the potential it is also possible to compute approximations of the final sample position with more than one step at the expense of model evaluations that would not occur in the unsteered case. This is especially relevant in cases where one knows the flow to be strongly non-linear.\\
In use cases similar to GFM\cite{feng_guidance_2025} where one additionally has samples $z$ from the target distribution it is also possible to directly use importance sampling with the for most CFM methods tractable $p_t(\cdot|z)$ and a time annealed potential to resample, although this option has not been explored by us.

\section{Results}\label{results}

We first analyse FK's ability to isolate modes in a high-dimensional setting in a benchmark in \ref{sec:hypercube}. We then compare FK to Guidance Flow Matching \cite{feng_guidance_2025} on the synthetic datasets used in that work in \ref{sec:guidance}. Finally, we demonstrate the usefulness of FK in the applied setting of producing chirality-aware transition state geometries in \ref{sec:ts}.

\subsection{High-dimensional mode isolation}\label{sec:hypercube}

\begin{figure}
    \centering
    \includegraphics[width=1\linewidth]{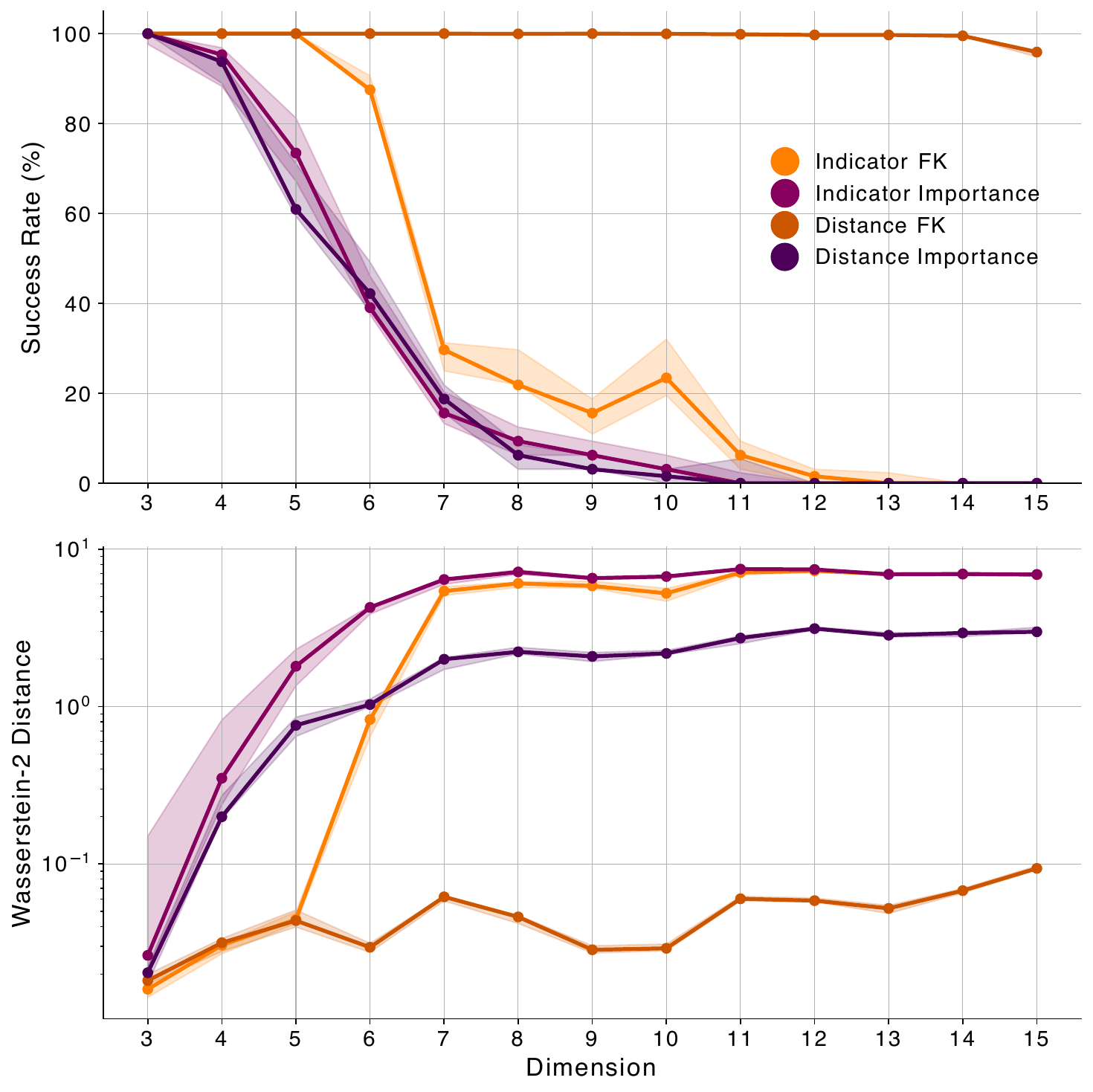}
    \caption{Success rate (top) and sliced Wasserstein-2 approximation (bottom) for FK and IS on the $d$-dimensional benchmark for 32 particles using the indicator and distance potentials. Shown are the median and quartiles over 10 runs. All inference used 50 integration steps with FK reweighing every 3 steps for FK.}
    \label{fig:hypercube_scaling}
\end{figure}
\begin{figure}
    \centering
    \includegraphics[width=1\linewidth]{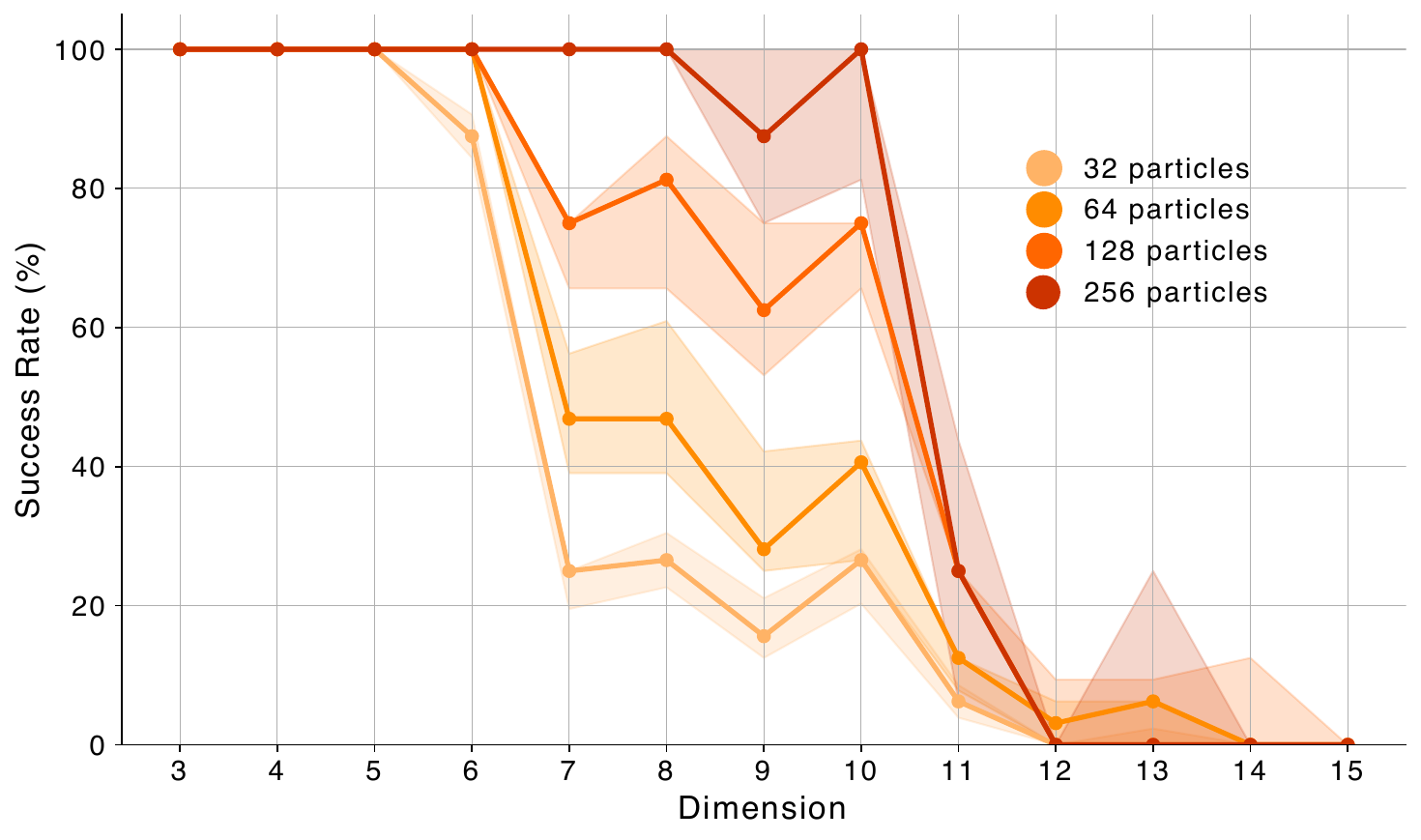}
    \caption{Success rate for FK with the indicator potential for different particle system sizes. Shown are the median and quartiles over 10 runs. Inference used 50 integration steps with FK reweighing every 3 steps.}
    \label{fig:hypercube_particles}
\end{figure}

Importance Sampling (IS) techniques deal particularly poorly with high-dimensional and multimodal settings. To test the performance of FK-CFM in such a setting, we perform the following benchmark on synthetic datasets. In $d$ dimensions, we train an ([SF]²M) \cite{tong_simulation-free_2024} model on a flow from uniform distribution on the hypercube $[-1, 1]^d$ to a distribution of isotropic Gaussians with small variance centered around the corners of the hypercube $[-2, 2]^d$. In this setting, the number of modes increases exponentially in dimension. At the same time each individual mode becomes less likely as the dimensions increase. 
\\
We try to isolate one of the Gaussians by tilting with potentials of the form $$I(x):= w\cdot\left(1-\mathbbm 1_{[0,\infty)^d}(x)\right)$$ and $$D((x_i)_{i=1}^n) = w \sum_{i=1}^d \max(0, - x_i).$$ The \textit{indicator potential} $I$ is thus a discontinuous step function that tilts mass away from $\left([0,\infty)^d\right)^C$, while the \textit{distance potential} $D$ is a continuous function that achieves the same purpose. For large values of $w$, only the Gaussian centered at $(2, \dots, 2)$ should remain. We measure the success of the steering by the relative number of samples that land in $[0,\infty)^d$ as well as a sliced Wasserstein-2 approximation of the distance to the target Gaussian. We compare FK with IS using the same number of particles.\\
We find that on the indicator potential, FK outperforms IS but suffers from a similar performance degradation once the number of modes is larger than the number of simulation particles, as shown in Fig.~\ref{fig:hypercube_particles}.  However, FK is able to use the information given through the continuous distance potential very effectively and does not suffer from the same curse of dimensionality, isolating the relevant mode with small particle sizes even in high dimensions (see Figure \ref{fig:hypercube_scaling}). This is especially noteworthy given the difficulties with high-dimensional distributions reported for guidance methods for flow matching \cite{feng_guidance_2025}, which our FK implementation overcomes.

\subsection{Comparison with Guidance}\label{sec:guidance}
We evaluated FKS on three pairs of source and target distributions of synthetic 2D datasets in alignment with \cite{feng_guidance_2025}: Circle to S-curve, uniform to 8 Gaussians and 8 Gaussians to Moons, see Fig.~\ref{fig:guidance_comp_small}. We trained a CFM model and a minibatch optimal transport CFM (OT) model \cite{tong_improving_2024} based on their configuration and evaluated Guidance Flow Matching (GFM) and FK using the same two models. For FK we used deterministic inference only without diffusion injection to avoid training a separate score model. We chose to include OT since it naturally generates straighter paths which increases the accuracy of the reward estimation. For GFM we used 10240 samples from the target distribution for the Monte Carlo sampling of the guidance field. We ran all models and steering methods with 40 integration steps on 1024 samples. The results can be seen in Figure \ref{fig:guidance_comp_small} and Table \ref{tab:guidance_comparison}. For details on the parameters used and an extended plot see the Appendix \ref{appendix:results}. As expected, using FK without stochastic inference results in less distinct samples although all 1024 are still present. We find that FK managed to achieve lower Wasserstein-2 distances than GFM without reliance on using additional target samples during inference or training an additional model. However, it is important to note that we did not perform a rigorous hyperparameter optimisation on the trained models or the inference methods and include this comparison mostly as an illustration to show that both methods solve similar tasks. In application either method can have its advantages purely from a perspective of design: as shown in Section \ref{sec:hypercube}, FK performs well on cases where only a small fraction of prior samples are relevant for the tilted distribution by eliminating superfluous samples and repopulating the important samples through stochasticity. For these situations, as also mentioned in \cite{feng_guidance_2025}, GFM struggles since it needs to transport the whole prior distribution to the target using untilted target samples. On the other hand, Guidance can be expected to perform better for cases where it is important to capture the full tilted distribution well and not just the high-reward areas, since FK is not designed for this task specifically. Furthermore it can be important in application to obtain a direct coupling between prior samples and tilted target samples, which is possible directly with GFM but would require a backwards integration for FK. 

\begin{figure}
    \centering
    \includegraphics[width=1.0\linewidth]{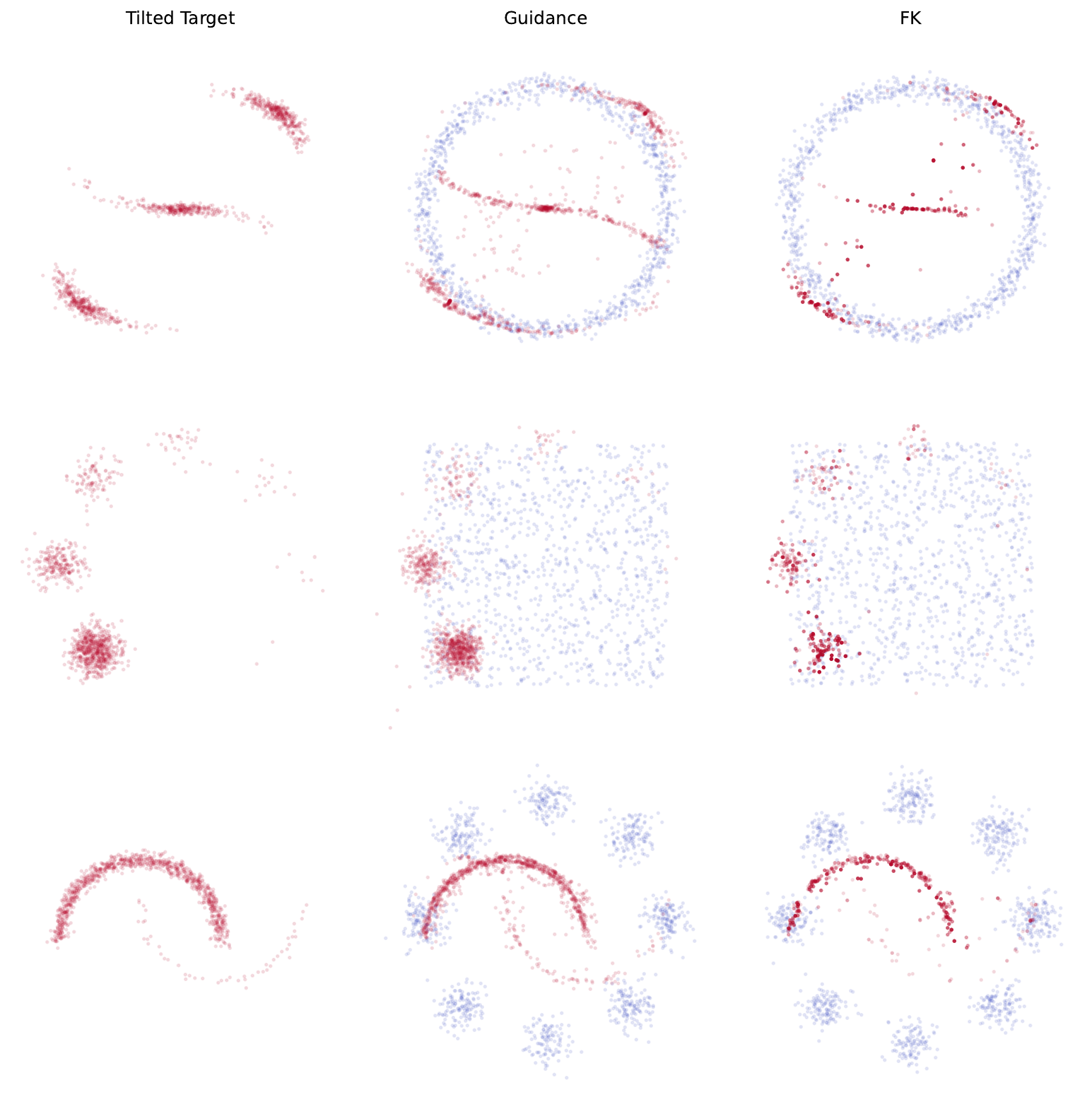}
    \caption{Distributions generate by Monte Carlo Guidance CFM\cite{feng_guidance_2025} (middle) and FK (right). The target distribution tilted with the same potential is displayed on the left.}
    \label{fig:guidance_comp_small}
\end{figure}

\begin{table}
  \centering
  \makebox[\linewidth]{%
    \begin{tabular}{l||c|c}
      \toprule
      & G CFM & FK CFM \\
      \midrule
      Circle $\rightarrow$ S
      &  \(0.337 \pm 0.020\) & \(0.274 \pm 0.073\) \\
      Uniform $\rightarrow$ 8 Gaussians
      &  \(0.178 \pm 0.028\) & \(0.137 \pm 0.032\) \\
      8 Gaussians $\rightarrow$ Moons
      &  \(0.122 \pm 0.011\) & \(0.097 \pm 0.020\) \\
      \bottomrule
    \end{tabular}%
  }
  \vspace{0.75em}
  \makebox[\linewidth]{%
    \begin{tabular}{l||c|c}
      \toprule
      &  G OT & FK OT \\
      \midrule
      Circle $\rightarrow$ S
      & \(0.244 \pm 0.019\) & \(0.216 \pm 0.059\) \\
      Uniform $\rightarrow$ 8 Gaussians
      &  \(0.343 \pm 0.007\) & \(0.132 \pm 0.042\) \\
      8 Gaussians $\rightarrow$ Moons
      & \(0.226 \pm 0.009\) & \(0.068 \pm 0.010\) \\
      \bottomrule
    \end{tabular}%
  }
  \caption{Wasserstein-2 distances to the tilted distributions for the three dataset pairs. Top: CFM with Guidance (G CFM) and Feynman Kac steering (FK CFM). Bottom: minibatch OT with Guidance (G OT) and Feynman Kac steering (FK OT). Means and standard deviations are over 10 batches of 1024 samples each with 40 integration steps. Lower is better.}
  \label{tab:guidance_comparison}
\end{table}

\subsection{Chirality-aware Transition State generation}\label{sec:ts}
\begin{figure}[h]
    \centering
    \includegraphics[width=1.0\linewidth]{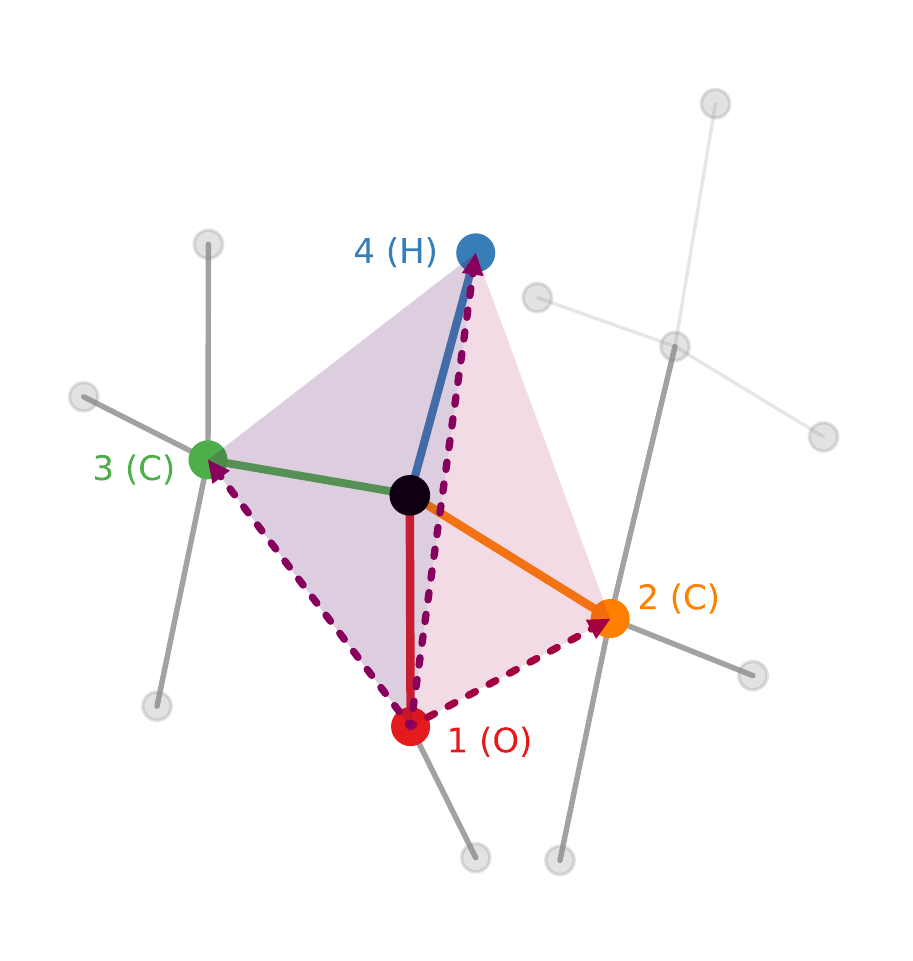}
    \caption{Tetrahedral chiral centre of 2-Butanol. The neighbours are labelled with their CIP priorities. The chiral potential is determined by the volume of the tetrahedron spanned by the centre's neighbours and their ordering. }
    \label{fig:volume}
\end{figure}

For generating TS geometries from 2D graph information alone, we use a FK approach inspired by \cite{wohlwend_boltz-1_2025} and define a potential based on tetrahedral chirality to tackle TS chirality. However, instead of defining the potential based on the dihedral angles involved in a tetrahedral chiral centre, we calculate a potential based on the directed chiral volume of the tetrahedron spanned by the neighbouring atoms based on the chirality of the reactant and product molecules. Assume $\mathbf x_1, \mathbf x_2, \mathbf x_3, \mathbf x_4$ are positions that are sorted by the CIP priorities of the neighbours of a chiral centre of either the reactants or products. Then we define the chiral volume as the triple product 
\begin{equation}\label{eq:volume}
    V(\mathbf x_1, \mathbf x_2, \mathbf x_3, \mathbf x_4) = \frac{\left[(\mathbf x_2- \mathbf x_1)\times(\mathbf x_3-\mathbf x_1)\right]\cdot(\mathbf x_4- \mathbf x_1)}6.
\end{equation}
An illustration of this can be seen in Figure \ref{fig:volume}. We then flip the sign of this volume if the chiral centre is S such that if the positions have correct chirality the signed volume is always negative. This is then passed through a Relu function to set the potential for correct centres to zero. The final potential for any geometry is the sum over all chiral centres of both reactant and product of this directed tetrahedral volume.
\\
As a base model we use GoFlow \cite{galustian_goflow_2025}, trained with the same parameters as mentioned there on RDB7 \cite{spiekermann_high_2022}. We use the same median aggregation as introduced by GoFlow for both the base model over several samples and FK over the particles. 
\\
\begin{figure}[h]
    \centering
    \includegraphics[width=1.0\linewidth]{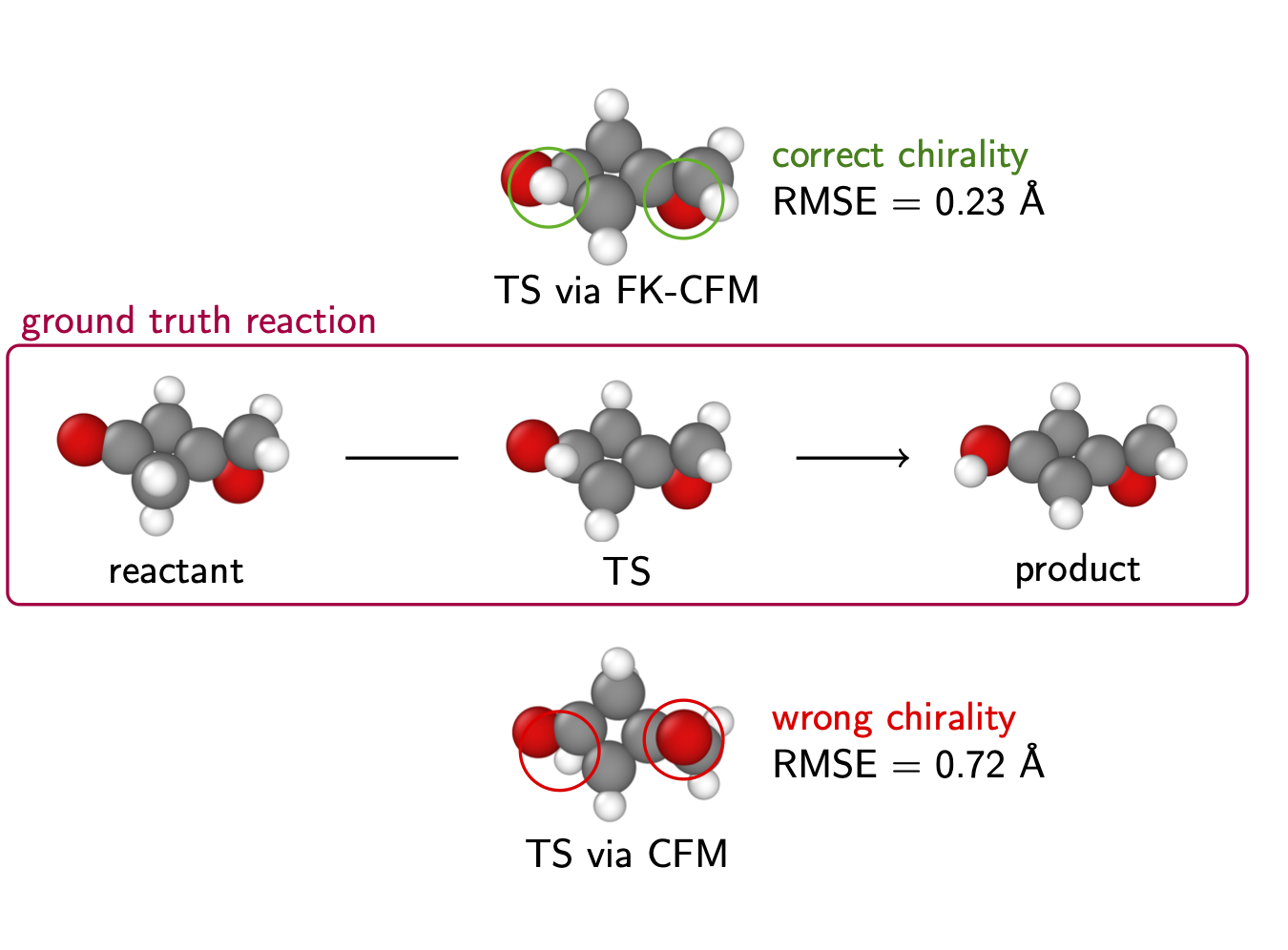}
    \caption{Exemplary ground-truth reaction, and the TS structures predicted by FK-CFM (top), and CFM (bottom), where without steering, wrong chirality in two chiral centres (one within the reaction center) is predicted.}
    \label{fig:rxn120}
\end{figure}
\begin{figure*}
    \centering
    \includegraphics[width=0.9\linewidth]{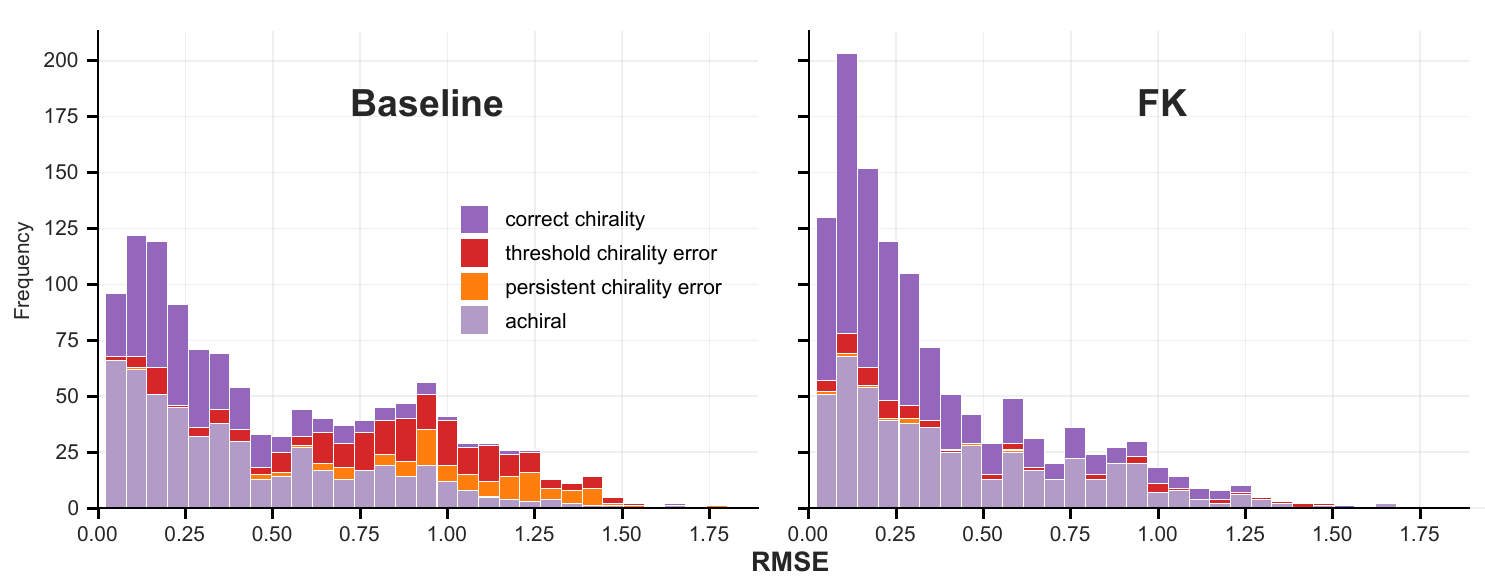}
    \caption{RMSE to the ground truth for both the base model (left) and a FK model with a chiral potential (right). The unsteered model has a clear error modality of predicting the wrong chirality, as indicated by the reactions with high RMSE that are categorised with a persistent chirality error (orange) or a threshold chirality error (red). FK eliminates this modality and overwhelmingly predicts the correct chirality.}
    \label{fig:rmse_histogram}
\end{figure*}
As noted above, the chirality of TSs is hard to classify exactly. The base model without steering has a failure mode of predicting the wrong chirality that manifests in the root mean squared error (RMSE) to the ground truth as a distinct peak. An exemplary reaction and the generated TSs is shown in Fig.~\ref{fig:rxn120}, together with the discrepancy in RMSE. As such, RMSE is a good basic measure of how well FK can eliminate predictions with the wrong chirality. We further introduce two additional metrics that help us analyse the effectiveness of the steering to eliminate wrong chiralities. \\
Firstly, we can measure the prediction accuracy directly on all chiral centres that are unambiguous since their chirality does not change during the reaction. We define a \textit{consistent chirality error} at reactions for which at least one reaction‑consistent chiral centre has a chirality opposite to that dictated by the 2D graph, based on the sign of the chiral volume as in equation \eqref{eq:volume}.\\
Secondly we want to give a score that incorporates all chiral centres. Similar to equation \eqref{eq:volume} we calculate a chiral volume but this time normalised by the total edge lengths:
\begin{equation*}
    \tilde V(\mathbf x_1, \mathbf x_2, \mathbf x_3, \mathbf x_4) = \frac{\left[(\mathbf x_2- \mathbf x_1)\times(\mathbf x_3-\mathbf x_1)\right]\cdot(\mathbf x_4- \mathbf x_1)}{\|\mathbf x_2- \mathbf x_1\|\cdot\|\mathbf x_3- \mathbf x_1\|\cdot\|\mathbf x_4- \mathbf x_1\|}.
\end{equation*}
We define a \textit{thresholded chirality error} at reactions for which there is a (not necessarily reaction consistent) chiral centre where the sign of $\tilde V$ is inconsistent with with the chirality dictated by the 2D-graph and the absolute value of $\tilde V$ exceeds a threshold $\theta$. For the results presented below, we consistently use $\theta = 0.25$.

As can be seen in Figure \ref{fig:rmse_histogram} left, the failure mode for the base model is indeed caused by predictions with the wrong chirality, both for reactions with change in their chiral centres and those without. This occurs in approximately half of the chiral reactions. Applying FK with the chiral potentials is able to remove this failure mode and predict most TS with the correct chirality. Some reactions with low RMSE and the wrong chirality remain, however this is mostly due to the inaccuracy of the thresholded chirality error calculation and the indeterminate nature of the chirality of TS: the ground truth TS geometries have a similar fraction of reactions with a thresholded chirality error (see Table \ref{tab:rmse_table}). We further find ground truth TSs with an impossible chirality, inconsistent with the reactant and product, shown in the Appendix, where clearly the data entry is wrong. This also indicates that FK does not force a flat state onto these geometries due to the base model having little density there. \\
In summary, we find that FK-CFM generates chirality-aware transition states reliably, which addresses an important but previously unresolved challenge in the field, namely creating guess structures for transition states that adhere to constraints set by the reactant and product chiralities.

\begin{table}[]
    \centering
    \begin{tabular}{l|cccc}
         & DMAE (Å)& RMSE (Å)& PCE (\%) & TCE (\%) \\\hline
         GT\,&&&0.8&6.3\\ 
        GoFlow &0.123&0.527&9.0&27.8\\
        GoFlow + FK & 0.120&0.369&0.7&5.8
    \end{tabular}
    \caption{Comparison of the base model to the chirality based FK. The chirality error mode cannot be seen in the mean absolute error of the inter-atomic distances (DMAE) but manifests itself clearly in the direct root mean squared error of the atoms (RMSE). It also shows in the error rates for the persistent chirality error (PCE) and the thresholded chirality error (TCE) which are given as a relative amount over the full test dataset. Over the dataset, 56.6\% of the reactions contain at least one tetrahedral chiral centre in either reactant or product. }
    \label{tab:rmse_table}
\end{table}

\section{Discussion}\label{discussion}
In the following, we discuss limitations of FK we encountered in this work.
The main limitation of FK as presented here is the reliance on a tractable score for the injection of stochasticity during inference. We have mentioned several methods to deal with this in practice. At the lack of such a score one can still apply FK on the deterministic model and achieve guaranteed convergence but with a reduction in number of distinct samples, although this is a trait shared with IS.\\
Future work should also concentrate on how FK can integrate with guidance methods that adapt the underlying velocity field such as GFM or steering based on the gradients of the potential or the potential estimation while preserving the marginals. This could be achieved by adapting the steering potentials at updates of the velocity field through an appropriate change of measures and Girsanov's theorem. We leave this to be explored in future work. \\
In handling chirality of TS, we focused purely on tetrahedral stereochemistry. The methods shown here open up the possibility to include further geometry based tilting, such as bond stereochemistry and steric clash. It also remains to investigate more deeply how FK deals with extreme cases where the TS lies very close to either reactant or product. The prediction of low-RMSE geometries that are characterised as wrong by the thresholded chirality error as well as the similarity in DMAE between the steered and unsteered model seem to indicate that on RDB7 this poses no problem, but \cite{corley_accelerating_2025} explicitly mention steering into out of distribution regions as a likely error source for diffusion FK as applied in \cite{wohlwend_boltz-1_2025}. This can be alleviated by stronger intermediate reward estimates or by using schedules that less aggressively try to steer towards high-reward regions, at the cost of sampling efficiency.

\section{Conclusion}\label{sec5}
We have successfully derived Feynman-Kac steering for conditional flow matching, and showcased its ability to generate samples from a tilted distribution on low and high-dimensional synthetic data, where we demonstrate how FKS performs favourably even for high-dimensional tasks with a sparse tilted distribution. We have further implemented FK steering for CFM on the challenging real-world use case of generating chemical transition state structures steered toward correct chiralities, addressing a long standing challenge in this field. We envision that FK-CFM will enable advances in several other fields of generative modelling.

\section*{Data and Software Availability}
Data and code to reproduce this study is available open-source at \href{https://github.com/heid-lab/fkflow}{https://github.com/heid-lab/fkflow}

\section*{Acknowledgments}
Funded by the European Union (ERC-2024-STG, Project 101162908 — DeepRxn). Views and opinions expressed are however those of the author(s) only and do not necessarily reflect those of the European Union or the European Research Council. Neither the European Union nor the granting authority can be held responsible for them. 
This research was funded in part by the Austrian Science Fund (FWF)
[10.55776/STA192]. For open access purposes, the author has applied a CC BY public copyright license to any author accepted manuscript version arising from this submission.

\section*{Author contributions}
\textbf{KM:} Conceptualization, Formal analysis, Investigation, Methodology, Project administration, Software, Validation, Visualization, Writing - original draft.
\textbf{LG:} Conceptualization, Data curation, Investigation, Software, Validation, Writing - review \& editing.
\textbf{MK:} Conceptualization, Data curation.
\textbf{EH:} Conceptualization, Funding acquisition, Project administration, Supervision, Visualization, Writing - review \& editing.

\begin{figure*}
    \centering
    \includegraphics[width=1\linewidth]{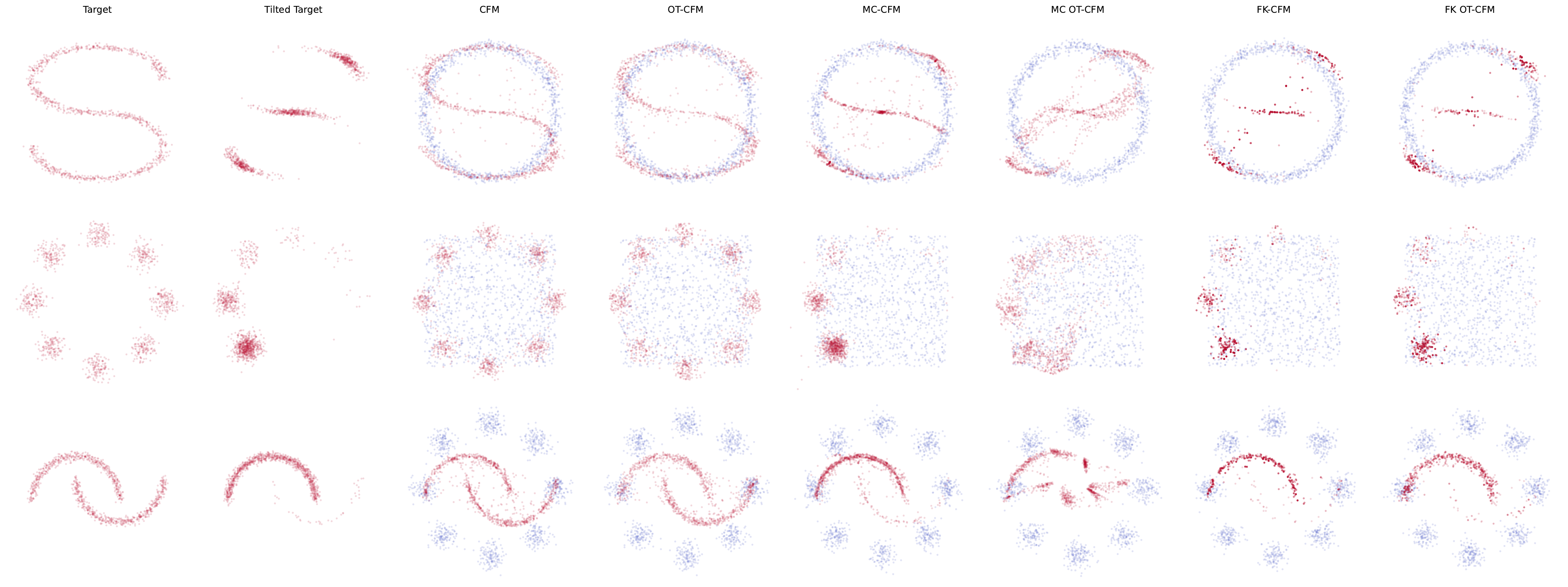}
    \caption{Evaluation of Guidance Flow Matching with Monte Carlo Guidance (MC) and deterministic Feynman-Kac steering (FK) on synthetic datasets with a trained Conditional Flow Matching model (CFM) and a Minibatch Optimal Transport model (OT-CFM). Included for comparison are also samples from the base target $p_1(x)$, the tilted target $\frac1Z p_1\exp(-J)$ and untilted CFM and OT-CFM. For each, 1024 samples are shown that were generated with 40 integration steps. GFM was calculated using 10240 MC samples. For FK, resampling was performed every 5 integration steps with a particle size of 128.}
    \label{fig:guidance_plot}
\end{figure*}

\section*{Appendix} \label{appendix}

\subsection{Experiment details}\label{appendix:results}
\subsubsection{High-dimensional mode isolation}
For each dimension $d$ we train a ([SF]²M) model on the prior and target distributions described in the text. The model architecture is a four layer MLP with a width of 128. The batch size is held constant at $512$ but we increase the number of training epochs by dimension as $1000\cdot d$. We use $\sigma =2.0$ throughout, also during inference. The Wasserstein-2 distances are approximated via $4096$ random one-dimensional projections. All runs, unless otherwise noted are run with 32 FK particles, 50 integration steps and a reweighting every 3 steps. 

\subsubsection{Comparison with Guidance}
We performed the comparison directly using training scripts as given in \cite{feng_guidance_2025}, wrapping them for FK inference. For GFM inference we directly took the hyperparameters as given where possible. For OT CFM, only the hyperparameters for the pair Uniform to 8 Gaussians was available. We used the same hyperparameters for the other two pairs, increasing the scale parameter and MC samples since this led to better results. The Wasserstein distances in Table \ref{tab:guidance_comparison} are given as mean over 10 runs of 1024 samples. For FK we used 128 particles for the resampling and resampled ever 10 steps.  One of the 10 runs can be seen for each of the three dataset combinations and each of the model/inference method combinations in Figure \ref{fig:guidance_plot}.

\subsubsection{TS generation}
We trained GoFlow with the same parameters as in \cite{galustian_goflow_2025} (5.2M parameter model). For the transformation into an SDE we assumed a Gaussian prior despite the Kabsch rotations that technically happen during training - experimentally no difference in performance occurred between the deterministic inference and SDE inference with no steering. We decreased diffusion injection linearly in time with a base variance of $0.3$ at $t=0$ and a variance of $0.0$ at $t=1$. We used a steering temperature of $\lambda =0.4$. 

\subsubsection{Inconsistent ground truth data}
For some reactions in RDB7, we find that the TS chirality is inconsistent with the reactant and product chirality, as shown for an example reaction in Fig.~\ref{fig:rxn82}. The generated TS with FK, in contrast, depicts the correct chirality and, importantly, the correct reaction mechanism. The PCE and TCE error modalities in the ground truth data thus arises from incorrect database entries in some cases.
\begin{figure}[h]
    \centering
    \includegraphics[width=1.0\linewidth]{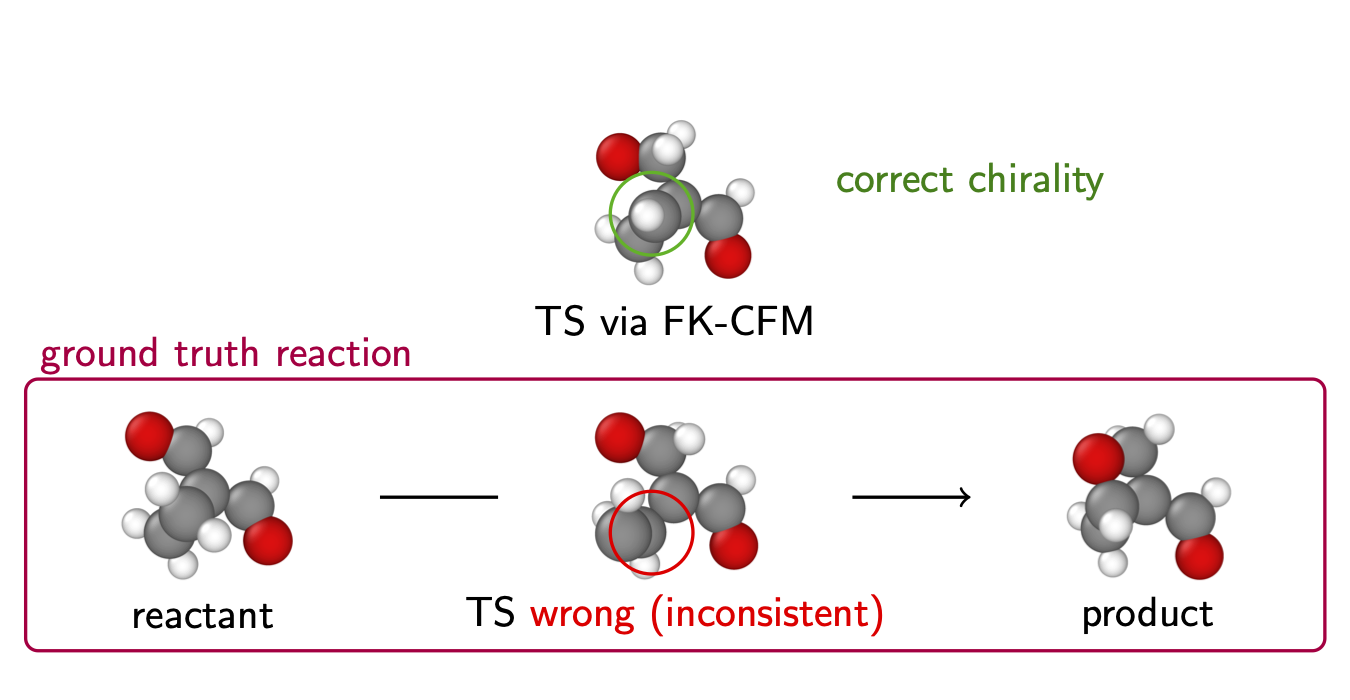}
    \caption{Exemplary ground-truth reaction with a wrong TS. Since the highlighted carbon atom sits on top (out of the plane) in the reactant and product, it cannot be below (under the plane) for the TS. The predicted TS by FK-CFM corrects this inconsistency.}
    \label{fig:rxn82}
\end{figure}

\end{document}